\def\eqref#1{equation~\ref{#1}}
\def\1{\bm{1}}
\DeclareMathAlphabet{\mathsfit}{\encodingdefault}{\sfdefault}{m}{sl}
\SetMathAlphabet{\mathsfit}{bold}{\encodingdefault}{\sfdefault}{bx}{n}
\newtheorem{lemma}{Lemma}
\newcommand{\comment}[1]{}
\newcommand{\graph}{\mathcal{G}}
\newcommand{\nodes}{\mathcal{V}}
\newcommand{\node}{v}
\newcommand{\edges}{\mathcal{E}}
\newcommand{\ourmodelfix}{LanczosNet}
\newcommand{\ourmodel}{AdaLanczosNet}
\newcommand{\RNum}[1]{\uppercase\expandafter{\romannumeral #1\relax}}
\DeclareRobustCommand\onedot{\futurelet\@let@token\@onedot}
\def\@onedot{\ifx\@let@token.\else.\null\fi\xspace}
\def\ie{\emph{i.e}\onedot}
\title{LanczosNet: Multi-Scale Deep Graph Convolutional Networks}
\author{Renjie Liao$^{1,2,3}$, Zhizhen Zhao$^{4}$, Raquel Urtasun$^{1,2,3}$, Richard S. Zemel$^{1,3,5}$ \\
University of Toronto$^1$, Uber ATG Toronto$^2$, Vector Institute$^3$, \\
University of Illinois at Urbana-Champaign$^4$, Canadian Institute for Advanced Research$^5$ \\
\texttt{\{rjliao, urtasun, zemel\}@cs.toronto.edu, zhizhenz@illinois.edu}
}
\begin{document}

\maketitle

\begin{abstract}
We propose the Lanczos network (LanczosNet), which uses the Lanczos algorithm to construct low rank approximations of the graph Laplacian for graph convolution.
Relying on the tridiagonal decomposition of the Lanczos algorithm, we not only efficiently exploit multi-scale information via fast approximated computation of matrix power but also design learnable spectral filters.
Being fully differentiable, LanczosNet facilitates both graph kernel learning as well as learning node embeddings. 
We show the connection between our LanczosNet and graph based manifold learning methods, especially the diffusion maps.
We benchmark our model against several recent deep graph networks on citation networks and QM8 quantum chemistry dataset. 
Experimental results show that our model achieves the state-of-the-art performance in most tasks.
Code is released at: \url{https://github.com/lrjconan/LanczosNetwork}.
\end{abstract}

\section{Introduction}

Graph-structured data is ubiquitous in real world applications, social networks, gene expression regulatory networks, protein-protein interactions, and many other physical systems. 
How to model such data using machine learning, especially deep learning, has become a central research question~\cite{battaglia2018relational}.
For supervised and semi-supervised tasks such as graph or node classification and regression, learning based models can be roughly categorized into two classes, formulated either in terms of graph convolutions~\cite{bruna2013spectral} or recurrent neural networks~\cite{scarselli2009graph}. 

Methods based on recurrent neural networks (RNN), especially graph neural networks (GNN)~\cite{scarselli2009graph}, repeatedly unroll a message passing process over the graph by exchanging information between the nodes.
In theory, a GNN can have as large a model capacity as its convolutional counterpart.
However, due to the instability of RNN dynamics and difficulty of optimization, GNN and its variants are generally slower and harder to train.

In this paper we focus on graph convolution based methods.
Built on top of the graph signal processing (GSP) approaches~\cite{ortega2018graph}, these methods extend convolution operators to graphs by leveraging spectral graph theory, graph wavelet theory, etc. 
Graph convolutions can be stacked and combined with nonlinear activation functions to build deep models, just as in  regular convolutional neural networks (CNN).
They often have large model capacity and achieve promising results.
Also, graph convolution can be easily implemented with modern scientific computing libraries.

There are two main issues with current graph convolution approaches.
First, it is not clear how to efficiently leverage multi-scale information except by directly stacking multiple layers.
Having an effective multi-scale scheme is key for enabling the model to be invariant to scale changes, and to capture many intrinsic regularities~\cite{witkin1983scale,coifman2005geometric}.
Graph coarsening methods have been proposed to form a hierarchy of multi-scale graphs \cite{defferrard2016convolutional}, but this coarsening process is fixed during both inference and learning which may cause some bias.
Alternatively, the graph signal can be multiplied by the exponentiated graph Laplacian, where the exponent indicates the scale of the diffusion process on the graph~\cite{atwood2016diffusion}.
Unfortunately, the computation and memory cost increases linearly with the exponent, which prohibits the exploitation of long scale diffusion in practice.
Other fast methods for computing matrix power such as exponentiating by squaring are very memory intensive, even for moderately large graphs.
Second, spectral filters within current graph convolution based models are mostly fixed. 
In the context of image processing, using a Gaussian kernel along with a spectral filter $f(\lambda) = 2\lambda - \lambda^2$ corresponds to running forward the heat equation (blurring) followed by running it backwards (sharpening)~\cite{singer2009diffusion}. 
Multi-scale kernels introduced in~\cite{rabin2018multi} extends the idea of forward-backward diffusion process and can be represented as polynomials of matrices related to a Gaussian kernel.
Learning the spectral filters is thus beneficial since it learns the stochastic processes on the graph which produce useful representations for particular tasks.
However, how to learn spectral filters which have large model capacity is largely underexplored. 

In this paper, we propose the Lanczos network (\ourmodelfix) to overcome the aforementioned issues.
First, based on the tridiagonal decomposition implied by the Lanczos algorithm, our model exploits the low rank approximation of the graph Laplacian. 
This approximation facilitates efficient computation of matrix powers thus gathering multi-scale information easily.
Second, we design learnable spectral filters based on the approximation which effectively increase model capacity.
In scenarios where one wants to learn the graph kernel and/or node embeddings, we propose another variant, \ie, adaptive Lanczos network ({\ourmodel}), which back-propagates through the Lanczos algorithm.
We show that our proposed model is closely related to graph based manifold learning approaches such as diffusion maps which could potentially inspire more work from the intersection between deep graph networks and manifold learning.
We benchmark against $9$ recent deep graph networks, including both convolutional and RNN based methods, on citation networks and a quantum chemistry graph regression problem, and achieve state-of-the-art results in most tasks.

\vspace{-0.15cm}
\section{Background}

In this section, we introduce some background material.
A graph $\graph$ with $N$ nodes is denoted as $\graph = (\nodes, \edges, A)$, where $A \in \mathbb{R}^{N \times N}$ is an adjacency matrix which could either be binary or real valued.
$X \in \mathbb{R}^{ N \times F}$ is the compact representation of node features (or graph signal in the GSP literature).
For any node $\node \in \nodes$, we denote its feature as a row vector $X_{\node, :} \in \mathbb{R}^{1 \times F}$. 
We use $X_{:, i}$ to denote the $i$-th column of $X$.


\paragraph{Graph Fourier Transform}
Given input node features $X$, we now discuss how to perform a graph convolution.
Based on the adjacency matrix $A$, we compute the graph Laplacian $L$ which can be defined in different ways: (1) $L = D - A$; (2) $L = I - D^{-1} A$; (3) $L = I - D^{-\frac{1}{2}} A D^{-\frac{1}{2}}$, where $D$ is a diagonal degree matrix and $D_{i,i} = \sum_{j = 1}^N {A_{i,j}}$.
The definition (3) is often used in the GSP literature due to the fact that it is real symmetric, positive semi-definite (PSD) and has eigenvalues lying in $[0, 2]$.
In certain applications~\cite{kipf2016semi}, it was found that adding self-loops, i.e., changing $A$ to $A+I$, and using the affinity matrix $S = D^{-\frac{1}{2}} A D^{-\frac{1}{2}}$ instead of $L$ gives better results.
Since $S$ is real symmetric, based on spectral decomposition, we have $ S = U \Lambda U^{\top}$ where $U$ is an orthogonal matrix and its column vectors are the eigenvectors of $S$. 
The diagonal matrix $\Lambda$ contains the sorted eigenvalues where $\Lambda_{i,i}=\lambda_i$ and $1\geq \lambda_1 \geq \dots \geq \lambda_N \geq -1$. 
Based on the eigenbasis, we can define the \textit{graph Fourier transform} $Y = U^{\top} X$ and its inverse transform $\hat{X} = U Y$ following~\cite{shuman2013emerging}. 
Note that $L = I - D^{-\frac{1}{2}} A D^{-\frac{1}{2}}$ shares the same eigenvectors with $S=D^{-\frac{1}{2}} A D^{-\frac{1}{2}}$ and the eigenvalues of $L$ are $\mu_i = 1 - \lambda_i$.
Therefore, $L$ and $S$ share the same \textit{graph Fourier transform} which justifies the usages of $S$.
Different forms of filters can be further constructed in the spectral domain.

\paragraph{Localized Polynomial Filter}
A $\tau$-localized polynomial filter is typically adopted in GSP literature~\cite{shuman2013emerging}, $g_{w}(\Lambda) = \sum_{t=0}^{\tau-1} w_{t} \Lambda^{t}$, where $\bm{w} = \left[w_{0}, w_{1}, \dots, w_{\tau-1}\right] \in \mathbb{R}^{\tau \times 1}$ is the filter coefficient, i.e., learnable parameter.
The filter is $\tau$-localized in the sense that the filtering leverages information from nodes which are at most $\tau$-hops away. 
One prominent example of this class is the Chebyshev polynomial filter~\cite{defferrard2016convolutional}.  
Here the graph Laplacian is modified to $\tilde{L} = 2L/\lambda_{\text{max}} - I$ such that its eigenvalues fall into $[-1, 1]$. 
Then the Chebyshev polynomial recursion is applied: $\tilde{X}(t) = 2 \tilde{L} \tilde{X}(t-1) - \tilde{X}(t-2)$ where $\tilde{X}(0) = X$ and $\tilde{X}(1) = \tilde{L} X$. 
For a pair of input and output channels $(i, j)$, the final filtering becomes, 
$y_{i, j} = [\tilde{X}(0)_{:, i}, \dots, \tilde{X}(\tau - 1)_{:, i}] \bm{w}_{i,j}$, where $\left[ \cdot \right]$ means concatenation along columns and $\bm{w}_{i,j} \in \mathbb{R}^{\tau \times 1}$. 
Chebyshev polynomials provide two benefits: they form an orthogonal basis of $L^2([-1, 1], dy/\sqrt{1 - y^2})$ and one avoids the spectral decomposition of $\tilde{L}$ in the filtering. 
However, the functional form of the spectral filter is not learnable, and cannot adapt to the data.

In this paper, instead of using the modified graph Laplacian $\tilde{L}$, we use the aforementioned $S$. 
Therefore, we can write the localized polynomial filtering in a more general form as,
\begin{align}\label{eq:poly_filter}
Y = \sum_{t=0}^{\tau - 1} g_{t}(S, \dots, S^{t}, X) W_{t},
\end{align}
where $g_{t}$ is a function that takes node features $X$ and powers of the affinity matrices up to the $t$-th order as input and outputs a $N \times F$ matrix. 
$W_{t} \in \mathbb{R}^{F \times O}$ is the corresponding filter coefficient and $Y \in \mathbb{R}^{N \times O}$ is the output.
One can easily verify that in the Chebyshev polynomial filter, any $i$-th column of the corresponding $g_{t}(X, S, \dots, S^{t})$ lies in the \textit{Krylov subspace} $\mathcal{K}_{t+1}(S, X_{:, i}) \equiv \mathrm{span} \{X_{:, i}, S X_{:, i}, \dots, S^{t}X_{:, i}\}$.
This naturally motivates the usage of Krylov subspace methods, like the Lanczos algorithm~\cite{lanczos1950iteration}, since it provides an orthonormal basis for the above Krylov subspace, thus making the filter coefficients compact. 

\section{Lanczos Networks}

In this section, we first introduce the Lanczos algorithm which approximates the affinity matrix $S$. 
We present our first model, called Lanczos network ({\ourmodelfix}), in which we execute the Lanczos algorithm once per graph and fix the basis throughout inference and learning.
Then we introduce the adaptive Lanczos network ({\ourmodel}) in which we learn the graph kernel and/or node embedding by back-propagating through the Lanczos algorithm.

\begin{minipage}{6.6cm}
\begin{algorithm}[H]
\caption{: Lanczos Algorithm}\label{alg:lanczos}
\begin{algorithmic}[1]
\STATE \textbf{Input:} $S, x, K, \epsilon$
\STATE \textbf{Initialization:} $\beta_{0} = 0$, $q_{0} = 0$, and $q_{1} = x / \Vert x \Vert$
\STATE \textbf{For} $j = 1, 2, \dots, K $:
\STATE \qquad $z = Sq_{j}$
\STATE \qquad $\gamma_{j} = q_{j}^{\top}z$
\STATE \qquad $z = z - \gamma_{j}q_{j} - \beta_{j-1}q_{j-1}$
\STATE \qquad $\beta_{j} = \Vert z \Vert_{2}$
\STATE \qquad \textbf{If} $\beta_{j} < \epsilon$, quit
\STATE \qquad $q_{j+1} = z  / \beta_{j}$
\STATE
\STATE $Q = \left[q_{1}, q_{2}, \cdots, q_{K} \right]$
\STATE Construct $T$ following Eq.~(\ref{eq:tridiagonal})
\STATE Eigen decomposition $T = BRB^{\top}$
\STATE Return $V=QB$ and $R$.
\end{algorithmic}
\end{algorithm}
\end{minipage}
\hfill
\begin{minipage}{7.1cm}
\begin{algorithm}[H]
\caption{: \ourmodelfix }\label{alg:graph_convolution_net}
\begin{algorithmic}[1]
\STATE \textbf{Input:} Signal $X$, Lanczos output $V$ and $R$, scale index sets $\mathcal{S}$ and $\mathcal{I}$,
\STATE \textbf{Initialization:} $Y_0 = X$
\STATE \textbf{For} $\ell = 1, 2, \dots, \ell_c $:
\STATE \qquad $Z = Y_{\ell - 1}$, $\mathcal{Z} = \{\emptyset \}$ 
\STATE \qquad \textbf{For} $j = 1, 2, \dots, \max(\mathcal{S})$:
\STATE \qquad \qquad $Z = SZ$
\STATE \qquad \qquad \textbf{If} $j \in \mathcal{S}$:
\STATE \qquad \qquad \qquad $\mathcal{Z} = \mathcal{Z} \cup Z$
\STATE \qquad \textbf{For} $i \in \mathcal{I}$:
\STATE \qquad \qquad $\mathcal{Z} = \mathcal{Z} \cup V \hat{R}(\mathcal{I}_i) V^{\top}Y_{\ell-1}$
\STATE \qquad $Y_\ell = \text{concat}(\mathcal{Z}) W_{\ell}$
\STATE \qquad \textbf{If} $\ell < L$
\STATE \qquad \qquad $Y_\ell = \text{Dropout}(\sigma(Y_\ell))$
\STATE Return $Y_{\ell_c}$.
\end{algorithmic}
\end{algorithm}
\end{minipage}


\subsection{Lanczos Algorithm}

Given the aforementioned affinity matrix $S$\footnote{When faced with a non-symmetric matrix, one can resort to the Arnoldi algorithm.} and node features $x \in \mathbb{R}^{N \times 1}$, the $N$-step Lanczos algorithm computes an orthogonal matrix $Q$ and a symmetric tridiagonal matrix $T$, such that $Q^{\top}SQ = T$. 
We denote $Q = \left[q_{1}, \cdots, q_{N} \right]$ where column vector $q_{i}$ is the $i$-th Lanczos vector. 
$T$ is illustrated as below, 
\begin{align}\label{eq:tridiagonal}
T = \left[ \begin{array}{cccc} \gamma_{1} & \beta_{1} &  &  \\ \beta_{1} & \ddots & \ddots & \\  & \ddots & \ddots & \beta_{N-1} \\  &  & \beta_{N-1} & \gamma_{N} \end{array} \right].
\end{align}
One can verify that $Q$ forms an orthonormal basis of the Krylov subspace $\mathcal{K}_{N}(S, x)$ and the first $K$ columns of $Q$ forms the orthonormal basis of $\mathcal{K}_{K}(S, x)$.
The Lanczos algorithm is shown in detail in Alg.~\ref{alg:lanczos}.
Intuitively, if we investigate the $j$-th column of the system $SQ = QT$ and rearrange terms, we obtain $\beta_{j}q_{j+1} = Sq_{j} - \beta_{j-1}q_{j-1} - \gamma_{j}q_{j}$, which clearly explains lines $4$ to $6$ of the pseudocode, i.e., it tries to solve the system in an iterative manner.
Note that the most expensive operation in the algorithm is the matrix-vector multiplication in line $4$.
After obtaining the tridiagonal matrix $T$, we can compute the Ritz values and Ritz vectors which approximate the eigenvalues and eigenvectors of $S$ by diagonalizing the matrix $T$. 
We only add this step in {\ourmodelfix} as we found back-propagating through the eigendecomposition in {\ourmodel} is not numerically stable.

\subsection{\ourmodelfix}

\begin{figure}
    \centering
    \includegraphics[width=0.99\linewidth]{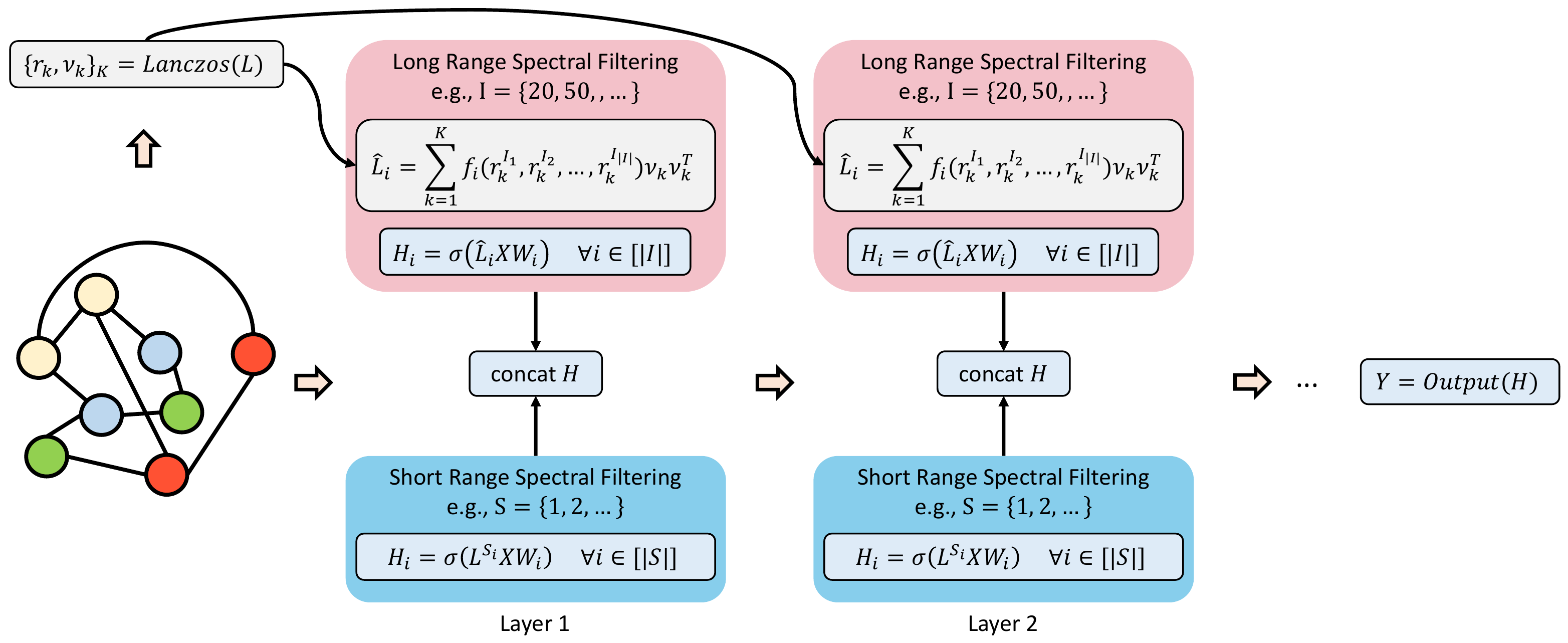}
    \vspace{-0.3cm}
    \caption{The illustration of the model. The learnable spectral filters have different parameters per layer. The nonlinear activation function $\sigma$ could be applied before or after the concatenation.}
    \vspace{-0.5cm}
    \label{fig:model}
\end{figure}

In this section, we first show the construction of the localized polynomial filter based on the Lanczos algorithm's output and discuss its limitations.
Then we explain how to construct the spectral filter using a particular low rank approximation and how to further make the filter learnable.
At last, we elaborate how to construct multi-scale graph convolution and build a deep network.

\paragraph{Localized Polynomial Filter} 
For the ease of demonstrating the concept of \textit{Krylov subspace}, we consider a pair of input and output channels $(i,j)$.
We denote the input as $X_{:,i} \in \mathbb{R}^{N \times 1}$ and the output as $Y_{:,j} \in \mathbb{R}^{N \times 1}$.
Executing the Lanczos algorithm for $K$ steps with the normalized $X_{:,i}$ as the starting vector, one can obtain the orthonormal basis $\tilde{Q}$ of $\mathcal{K}_{K}(S, X_{:,i})$ and the corresponding tridiagonal matrix $\tilde{T}$.
Recall that in the localized polynomial filtering, given the orthonormal basis of $\mathcal{K}_{K}(S, X_{:,i})$, one can write the graph convolution as
\begin{align}\label{eq:krylov_filter_fix}
Y_{j} = \tilde{Q} \bm{w}_{i,j},
\end{align}
where $\tilde{Q} \in \mathbb{R}^{N \times K}$ depends on the $X_{:,i}$ and $\bm{w}_{i,j} \in \mathbb{R}^{K \times 1}$ is the learnable parameter.
This filter has the benefit that the corresponding learnable coefficients are compact due to the orthonormal basis.
However, if one wants to stack multiple graph convolution layers, the dependency of $\tilde{Q}$ on $X_{:,i}$ implies that a separate run of Lanczos algorithm is necessary for each graph convolution layer which is computationally demanding.

\paragraph{Spectral Filter} 
Ideally, we would like to compute Lanczos vectors only once during the inference of a deep graph convolutional network.
Luckily, this can be achieved if we take an alternative view of Lanczos algorithm.
In particular, we can choose a random starting vector with unit norm and treat the $K$ step Lanczos layer's output as the low rank approximation $S \approx Q T Q^{\top}$.
Note that here $Q \in \mathbb{R}^{N \times K}$ has orthonormal columns and does not depend on the node features $X_{i}$ and $T$ is a $K \times K$ tridiagonal matrix. 
Following~\cite{simon2000low}, we prove the theorem below to bound the approximation error.
\begin{restatable}{theorem}{errortheorem}\label{the:error}
Let $U \Lambda U^{\top}$ be the eigendecomposition of an $N \times N$ symmetric matrix $S$ with $\Lambda_{i,i} = \lambda_i$, $\lambda_1 \geq \dots \geq \lambda_N$ and $U = [u_1, \dots, u_N]$.
Let $\mathcal{U}_j \equiv \mathrm{span}\{u_1, \dots, u_j\}$.
Assume $K$-step Lanczos algorithm starts with vector $v$ and outputs the orthogonal $Q \in \mathbb{R}^{N \times K}$ and tridiagonal $T \in \mathbb{R}^{K \times K}$.
For any $j$ with $1 < j < N$ and $K > j$, we have,
\begin{align}
    \Vert S - Q T Q^{\top} \Vert_{F}^{2} \leq \sum_{i=1}^{j} \lambda_i^2 \left( \frac{\sin{\left(v, \mathcal{U}_i\right)} \prod_{k=1}^{j-1} (\lambda_k - \lambda_N) / (\lambda_k - \lambda_j) }{\cos{\left(v, u_i\right)} T_{K-i}(1+2\gamma_i) } \right)^2 + \sum_{i=j+1}^{N} \lambda_i^2, 
    \nonumber 
\end{align}
where $T_{K-i}(x)$ is the Chebyshev Polynomial of degree $K-i$ and $\gamma_i = (\lambda_{i} - \lambda_{i+1}) / (\lambda_{i+1} - \lambda_{N})$.
\end{restatable}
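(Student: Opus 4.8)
The plan is to recast the Lanczos output as a projected copy of $S$ and then reduce the Frobenius error to the classical question of how well the Krylov subspace captures the dominant eigenvectors of $S$. First I would observe that, since the columns of $Q$ are orthonormal and $T = Q^{\top} S Q$, the approximation is $Q T Q^{\top} = P S P$, where $P := Q Q^{\top}$ is the orthogonal projector onto the Krylov subspace $\mathcal{K}_K(S, v) = \mathrm{span}\{v, Sv, \dots, S^{K-1}v\}$ spanned by the Lanczos vectors. Thus the quantity to bound is $\Vert S - P S P \Vert_F^2$, which depends only on $P$ and the spectrum of $S$.

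Next I would decompose $S - P S P = (I - P) S + P S (I - P)$; the two terms are orthogonal in the Frobenius inner product because $(I-P)P = 0$, so $\Vert S - P S P \Vert_F^2 = \Vert (I - P) S \Vert_F^2 + \Vert P S (I - P) \Vert_F^2$. Splitting $S = \sum_{i=1}^N \lambda_i u_i u_i^{\top}$ into its top-$j$ part and its tail, the tail eigenvalues contribute at most $\sum_{i=j+1}^N \lambda_i^2$, which is essentially the unavoidable best-rank-$j$ error, while each dominant term $\lambda_i u_i u_i^{\top}$ with $i \le j$ contributes an error governed by the portion of $u_i$ lying outside $\mathcal{K}_K$. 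Carrying out this bookkeeping (following Simon--Zha) yields the reduction
\[
\Vert S - P S P \Vert_F^2 \le \sum_{i=1}^{j} \lambda_i^2 \, \sin^2\angle(u_i, \mathcal{K}_K) + \sum_{i=j+1}^{N} \lambda_i^2 ,
\]
so it remains to bound $\sin\angle(u_i, \mathcal{K}_K)$ for each $i \le j$, where this angle matches the quantity $\sin(v, \mathcal{U}_i)$-type notation in the statement.

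The heart of the argument is this last estimate, obtained by the Kaniel--Paige--Saad polynomial technique. Writing $\mathcal{K}_K = \{ p(S) v : \deg p < K \}$ and expanding the unit starting vector $v = \sum_k \cos\angle(v, u_k)\, u_k$ in the eigenbasis, approximating $u_i$ inside $\mathcal{K}_K$ reduces to choosing a polynomial $p$ with $\deg p < K$ that is large at $\lambda_i$ and small at the remaining eigenvalues. I would take $p$ to be a scaled Chebyshev polynomial on the interval $[\lambda_N, \lambda_{i+1}]$, normalized to value $1$ at $\lambda_i$, multiplied by the linear factors that suppress the larger eigenvalues $\lambda_1, \dots, \lambda_{j-1}$ and produce the correction product $\prod_{k=1}^{j-1} (\lambda_k - \lambda_N)/(\lambda_k - \lambda_j)$. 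The Chebyshev extremal property then converts the denominator into $T_{K-i}(1 + 2\gamma_i)$ with $\gamma_i = (\lambda_i - \lambda_{i+1})/(\lambda_{i+1} - \lambda_N)$, and the starting-vector overlap enters through the ratio $\sin(v, \mathcal{U}_i) / \cos(v, u_i)$, giving exactly the stated factor; substituting into the reduction completes the proof.

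I expect the main obstacle to be this final polynomial construction: pinning down the auxiliary linear factors so that the product $\prod_{k=1}^{j-1}(\lambda_k - \lambda_N)/(\lambda_k - \lambda_j)$ emerges precisely, and invoking the Chebyshev min--max bound on the correct interval to obtain the $T_{K-i}(1 + 2\gamma_i)$ denominator, is the delicate classical step. By comparison, the projector reduction is routine once the Frobenius orthogonality of $(I-P)S$ and $PS(I-P)$ is noticed.
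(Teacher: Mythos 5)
Your reduction has a genuine gap at the "bookkeeping" step. Your starting identity is exact and correct: $T = Q^{\top} S Q$, so $Q T Q^{\top} = P S P$ with $P = QQ^{\top}$, and the Frobenius-orthogonal splitting $\Vert S - PSP \Vert_F^2 = \Vert (I-P)S \Vert_F^2 + \Vert P S (I-P) \Vert_F^2$ holds. The first term is exactly $\sum_{i=1}^{N} \lambda_i^2 \sin^2\angle(u_i, \mathcal{K}_K)$ and splits into head and tail as you describe. But your claimed reduction silently discards the second term, and it is not negligible: from the $K$-step Lanczos relation $SQ = QT + \beta_K q_{K+1} e_K^{\top}$ (where $e_K$ is the $K$-th standard basis vector) one computes $P S (I-P) = \beta_K\, q_K q_{K+1}^{\top}$, so $\Vert P S (I-P) \Vert_F^2 = \beta_K^2$, the square of the last off-diagonal coefficient produced by the run. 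This quantity is not dominated by the theorem's right-hand side: bounding it by $\Vert S(I-P) \Vert_F^2$ doubles the head terms (a factor $2$ the theorem does not have), while keeping it leaves an extra additive $\beta_K^2$. Indeed, as a general statement about orthogonal projections your intermediate inequality $\Vert S - PSP \Vert_F^2 \le \sum_{i\le j} \lambda_i^2 \sin^2\angle(u_i,\mathcal{K}_K) + \sum_{i>j}\lambda_i^2$ is false (already for a rank-one projection onto $\mathrm{span}\{v\}$ with $S = u_1 u_1^{\top}$ the left side is $1 - \cos^4\angle(v,u_1)$ while the right side is $\sin^2\angle(v,u_1)$), so no amount of eigenbasis bookkeeping can close this without an additional ingredient.

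The paper closes exactly this hole by going the other way at the first step: it invokes $SQ = QT$ (i.e., it treats the Lanczos residual $\beta_K q_{K+1}e_K^{\top}$ as zero), which gives $Q T Q^{\top} = S QQ^{\top}$ and hence the \emph{one-sided} identity $S - QTQ^{\top} = S(I - QQ^{\top})$; there is then only the single term $\Vert (I-P) S \Vert_F^2$, which is bounded via $\Vert (I - QQ^{\top}) u_i \Vert = \vert \sin\angle(u_i, \mathcal{K}_K)\vert \le \vert \tan\angle(u_i, \mathcal{K}_K)\vert$ and the Kaniel--Paige--Saad angle lemma quoted from Parlett. Note that under that same relation your cross term vanishes identically, since $P S (I-P) = QTQ^{\top}(I - QQ^{\top}) = 0$; so the repair is to justify (or assume, as the paper implicitly does) the residual-free relation before decomposing, at which point your argument collapses to the paper's. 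Your final Chebyshev-polynomial construction is the standard proof of precisely the lemma the paper cites rather than proves, so that part is a legitimate, if more laborious, way to finish; the defect is solely the unhandled $P S (I-P)$ term.
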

We leave the proof to the appendix. 
Note that the term $(\sum_{i=j+1}^{N} \lambda_i^2)^{1/2}$ 
is the Frobenius norm of the error between $S$ and the best rank-$j$ approximation $S_j$.
We decompose the tridiagonal matrix $T = B R B^{\top}$, where the $K \times K$ diagonal matrix $R$ contains the Ritz values and $B \in \mathbb{R}^{K \times K}$ is an orthogonal matrix. We have a low rank approximation of the affinity matrix $S \approx V R V^{\top}$, where $V = QB$. 
Therefore, we can rewrite the graph convolution as,
\begin{align}\label{eq:identity_filter_fix}
Y_{j}  = [X_i, S X_i, \dots, S^{K-1} X_i] \bm{w}_{i,j} \approx [X_i, V R V^{\top} X_i, \dots, V R^{K-1} V^{\top} X_i] \bm{w}_{i,j},
\end{align}
The difference between Eq.~(\ref{eq:krylov_filter_fix}) and Eq.~(\ref{eq:identity_filter_fix}) is that the former uses the orthonormal basis while the latter uses the approximation of the direct basis of $\mathcal{K}_{K}(S, X_{:,i})$. 
Since we explicitly operate on the approximation of spectrum, i.e., Ritz value, it is a spectral filter.
Such a filtering form will have significant computational benefits while considering the long range/scale dependency due to the fact that the $t$-th power of $S$ can be approximated as $S^{t} \approx V R^{t} V^{\top}$, where we only need to raise the diagonal entries of $R$ to the power $t$. 

\paragraph{Learning the Spectral Filter} 
Following the previous filter, one can naturally design learnable spectral filters. 
Denoting the diagonal entries of $R$ and the column vectors of $V$, \ie, Ritz values and vectors, as $\{(r_i, v_i) \vert i = 1, \dots, K\}$, we perform $i$-th spectral filtering as follows,
\begin{align}
\hat{L}_{i} = \sum_{k=1}^{K} f_{i}(r_{k}^{1}, r_{k}^{2}, \cdots, r_{k}^{K-1}) v_{k} v_{k}^{\top}
\end{align}
where $f_{i}$ is a multi-layer perceptron (MLP).
Therefore, we have the following graph convolution,
\begin{align}\label{eq:learnable_filter_fix}
Y_{j} = [X_i, \hat{L}_{1} X_i, \dots, \hat{L}_{K-1} X_i] \bm{w}_{i,j}.
\end{align}
Note that it includes the polynomial filter as a special case.
When positive semi-definiteness is a concern, one can apply an activation function like ReLU to the output of the MLPs.

\paragraph{Multi-Scale Graph Convolution}
Using any above filter, one can construct a deep graph convolutional network which leverages multi-scale information.
Taking the learnable spectral filter as an example, we can write one graph convolution layer in a compact way as below,
\begin{align}\label{eq:multi_scale_graph_conv_fix}
Y = \left[L^{\mathcal{S}_1} X, \dots, L^{\mathcal{S}_M} X, \hat{L}_{1}(\mathcal{I}) X, \dots, \hat{L}_{N}(\mathcal{I})X \right] W,
\end{align}
where weight $W \in \mathbb{R}^{(M+E)D \times O}$, $\mathcal{S}$ is a set of $M$ short scale parameters and $\mathcal{I}$ is a set of $E$ long scale parameters.
We consider a non-negative integer as scale parameter, e.g., $\mathcal{S} = \{0, 1, \dots, 5\}$, $\mathcal{I} = \{10, 20, \dots, 50\}$.
Here we slightly abuse the notation and define the $i$-th spectral filtering as 
\begin{align}
    \hat{L}_{i}(\mathcal{I}) = \sum_{k=1}^{K} f_{i}(r_{k}^{\mathcal{I}_1}, r_{k}^{\mathcal{I}_2}, \cdots, r_{k}^{\mathcal{I}_{\vert \mathcal{I} \vert}}) v_{k} v_{k}^{\top}.
\end{align}
Note that the convolution corresponding to short scales is similar to~\cite{atwood2016diffusion} where the number of matrix-vector multiplications is tied to the maximum scale of $\mathcal{S}$.
In contrast, the convolution of long scales decouples the Lanczos step $K$ and scale parameters $\mathcal{I}$, thus permitting great freedom in tuning scales as hyperparameters.
One can choose $K$ properly to balance the computation cost and the accuracy of the low rank approximation.
In our experiments, short scales are typically less than $10$ which have reasonable computation cost.
Moreover, the short scale part could sometimes remedy cases where the low rank approximation is crude.
We set the long scale no larger than $100$ in our experiments.
If the maximum eigenvalue of $S$ is $1$, we can even raise the power to infinity, which corresponds to the equilibrium state of diffusion process on the graph.

To build a deep network, we can stack multiple such graph convolution layers where each layer has its own spectral filter weights. 
Nonlinear activation functions, e.g., ReLU, and/or Dropout can be added between layers.
The inference algorithm of such a deep network is shown in Alg. \ref{alg:graph_convolution_net}.
The overall computation graph of the model is illustrated in Fig. \ref{fig:model}.
With the top layer representation, one can use softmax to perform classification or a fully connected layer to perform regression.
The Lanczos algorithm is run beforehand once per graph to construct the network and will not be invoked during inference and learning.


\subsection{\ourmodel}
In this section, we explain another variant which back-propagates through the Lanczos algorithm. 
This facilitates learning the graph kernel and/or node embeddings.

\paragraph{Graph Kernel}
Assume we are given node features $X$ and a graph $\graph$. 
We are interested in learning a graph kernel function with the hope that it can capture the intrinsic geometry of node representations. 
Given data points $x_i, x_j \in \mathcal{X}$, we define the anisotropic graph kernel, $k: \mathcal{X} \times \mathcal{X} \mapsto \mathbb{R}$ as,
\begin{align}\label{eq:graph_kernel}
k(x_{i}, x_{j}) = \exp\left( - \frac{ \Vert ( f_{\theta}(x_{i}) - f_{\theta}(x_{j}) ) \Vert^{2} } { \epsilon } \right).
\end{align}
where $f_{\theta}$ is an MLP. 
This class of anisotropic kernels is very expressive and includes self-tuning kernel~\cite{zelnik2005self} and the Gaussian kernel with Mahalanobis distances~\cite{weinberger2007metric}.
Moreover, for different kernel functions, the resulted graph Laplacians will converge to different limiting operators asymptotically.
For example, even for isotropic Gaussian kernels, the graph Laplacian can converge pointwise to the Laplace-Beltrami, Fokker-Planck operator and heat kernel under different normalizations~\cite{coifman2006diffusion,singer2006graph}.
In practice, we notice that choosing $\epsilon = \sum_{(p, q) \in \edges} \| ( f_\theta(x_{p}) - f_\theta(x_{q}) )\|^2 / {\vert \edges \vert}$ helps normalizing the pairwise distances, 
thus avoiding the gradient vanishing issue due to the exponential function.
This type of learnable anisotropic diffusion is useful in two ways.
First, it increases model capacity, thus potentially gaining better performance.
Second, it can better adapt to the non-uniform density of the data points on the manifold or nonlinear measurements of the underlying data points on a maninfold.
We can construct an adjacency matrix $A$ such that $A_{i, j} = k(x_{i}, x_{j})$ if $(i, j) \in \edges$ and $A_{i, j} = 0$ otherwise. 
Then we can obtain the affinity matrix $S = D^{-\frac{1}{2}} A D^{-\frac{1}{2}}$.

\paragraph{Node Embedding}
In some applications, we do not observe the node features $X$ but only the graph itself $\graph$, so we may need to learn an embedding vector per node. 
For example, this scenario applies in the quantum chemistry tasks where a node, i.e., an atom within a molecule, has rarely observed features. 
We can still use the above graph kernel to construct the affinity matrix which results in the same form except $f$ is discarded.
Learning embedding $X$ naturally amounts to learning the similarities between nodes.

\paragraph{Tridiagonal Decomposition}

Although all operations in {\ourmodelfix} are differentiable, we empirically observe that backpropagation through the eigendecomposition of the tridiagonal matrix is numerically instable.
The situation would be even worse if multiple eigenvalues are numerically close or one takes a large power in Eq.~(\ref{eq:multi_scale_graph_conv_fix}).
Therefore, we instead directly leverage the approximated tridiagonal decomposition $S \approx Q T Q^{\top}$ which is obtained by running the Lanczos algorithm $K$ steps.
Then we can rewrite the spectral filtering as following,
\begin{align}
    \hat{L}_{i}(\mathcal{I}) = Q g_{i}(\text{vec}(T^{\mathcal{I}_1}), \text{vec}(T^{\mathcal{I}_2}), \cdots, \text{vec}(T^{\mathcal{I}_{\vert \mathcal{I} \vert}})) Q^{\top}, 
\end{align}
where $\text{vec}(\cdot)$ means vectorization and $g_i(\cdot) = f_i(\cdot) + f_i(\cdot)^\top$, with  the output of an MLP $f_i$ reshaped to a matrix of the same size as $T$. This ensures that the output is symmetric.
 


With the above parameterization of the graph Laplacian and tridiagonal decomposition, we can back-propagate the loss through the Lanczos algorithm to either the graph kernel parameters $\theta$ or the node embedding $X$.
The overall model is similar to the {\ourmodelfix} except that the Lanczos algorithm needs to be invoked for each inference pass.

\section{Lanczos Network and Diffusion Maps}\label{sec:relation}

In this section, we highlight the relationship between {\ourmodelfix} and an important example of graph based manifold learning algorithms, diffusion maps~\cite{coifman2006diffusion}.

\paragraph{Diffusion Maps}
In diffusion maps, the weights in the adjacency matrix define a discrete random walk over the graph, where the Markov transition matrix $ P = D^{-1} A $ shows the transition probability in a single time step. 
Therefore, $P^t_{i, j}$ sums the probability of all paths of length $t$ that start at node $i$ and end at node $j$. 
It is shown in~\cite{coifman2006diffusion} that $P$ can be used to define an inner product in a Hilbert space.  
Specifically, we use the eigenvalues and right eigenvectors $\{ \lambda_l, \psi_l \}_{l = 1}^N$ of $P$ to define a diffusion mapping $\Phi_t$ as,
\begin{align}
\label{eq:dm}
    \Phi_{t}(i) = \left(\lambda_{1}^{t} \psi_{1}(i), \lambda_{2}^{t} \psi_{2}(i), \dots, \lambda_{N}^{t} \psi_{N}(i) \right),
\end{align}
where $\psi_{l}(i)$ is the $i$-th entry of the eigenvector $\psi_l$. Since the row stochastic matrix $P$ is similar to $S$, i.e., $P = D^{-1/2} S D^{1/2}$, we have $\psi_{l} = D^{-1/2} u_l$. The mapping $\Phi_t$ satisfies $\sum_{k = 1}^N P^t_{i,k} P^t_{j, k} /D_{k, k} = \left \langle \Phi_t(i), \Phi_t(j) \right \rangle$, where $\langle \cdot, \cdot \rangle$ is the inner product over Euclidean space. 
The diffusion distance between $i$ and $j$, $d^2_{\mathrm{DM}, t}(i, j) =\left \| \Phi_t(i) - \Phi_t(j) \right \|^2 = \sum_{k = 1}^{N} {( P^t_{i, k} - P^t_{j, k})^2} / {D_{k,k}}$,
is the weighted-$l_2$ proximity between the probability clouds of random walkers starting at $i$ and ending at $j$ after $t$ steps. 
Since all eigenvalues of $S$ reside in the interval $[-1, 1]$, for some large $t$, $\lambda_l^{t}$ in Eq.~(\ref{eq:dm}) is close to zero, and $d_{\mathrm{DM}, t}$ can be well approximated by using only a few largest eigenvalues and their eigenvectors. 


\paragraph{Connection to Graph Convolution} Apart from using diffusion maps to embed node features $X$ at different time scales, one can use it to compute the frequency representations of $X$ as below,
\begin{align}\label{eq:gft}
    \hat{X} = \Lambda^{t} U^{\top}X,
\end{align}
where $U$ are the eigenvectors of $S$ and define the graph Fourier transform.
The frequency representation $\hat{X}$ is weighted by the powers of the eigenvalues $\lambda_l^t$, suppressing entries with small magnitude of eigenvalues. 
Recall that in the convolution layer Eq.~(\ref{eq:learnable_filter_fix}) of {\ourmodelfix}, we use multiple such frequency representations with different scales $t$ and replace the eigenvalues $\Lambda$ in Eq.~(\ref{eq:gft}) with their approximation, i.e., Ritz values.
Therefore, in {\ourmodelfix}, spectral filters are actually applied to the frequency representations which are obtained by projecting the node features $X$ onto multiple diffusion maps with different scales.

\section{Related Work}

We can roughly categorize the application of machine learning, especially deep learning, to graph structured data into supervised/semi-supervised and unsupervised scenarios.
For the former, a majority of work focuses on node/graph classification and regression~\cite{vishwanathan2010graph,bronstein2017geometric,garcia2017few,battaglia2018relational}.
For the latter, unsupervised node/graph embedding learning~\cite{grover2016node2vec,duran2017learning} is common.
Recently, generative models for graphs, such as molecule generation, has drawn some attention~\cite{li2018learning,jin2018junction}.

\paragraph{Graph Convolution Based Models}
The first class of learning models on graphs stems from graph signal processing (GSP)~\cite{shuman2013emerging,ortega2018graph} which tries to generalize convolution operators from traditional signal processing to graphs.
Relying on spectral graph theory~\cite{chung1997spectral} and graph wavelet theory~\cite{hammond2011}, several definitions of frequency representations of graph signals have been proposed~\cite{ortega2018graph}.
Among these, spectral graph theory based one is popular, where graph Fourier transform and its inverse are defined based on the eigenbasis of the graph Laplacian.
Following this line, many graph convolution based deep network models emerge.
\cite{bruna2013spectral,henaff2015deep} are among the first to explore Laplacian based graph convolution within the context of deep networks.
Meanwhile, \cite{duvenaud2015convolutional} performs graph convolution directly based on the adjacency matrix to predict molecule fingerprints.
\cite{niepert2016learning} proposes a strategy to form same sized local neighborhoods and then apply convolution like regular CNNs. 
Chebyshev polynomials are exploited by~\cite{defferrard2016convolutional} to construct localized polynomial filters for graph convolution and are later simplified in graph convolutional networks (GCN)~\cite{kipf2016semi}.
Further accelerations for GCN based on importance sampling and control variate techniques have been proposed by~\cite{chen2018fastgcn,chen2018stochastic}.
Several attention mechanisms have been introduced in~\cite{velickovic2017graph,wang2018deep} to learn the weights over edges for GCNs.
Notably, \cite{atwood2016diffusion} proposes diffusion convolutional neural networks (DCNN) which uses diffusion operator for graph convolution.
Lanczos method has been explored for graph convolution in~\cite{susnjara2015accelerated} for the purpose of acceleration.
Specifically, they only consider the localized polynomial filter case in our {\ourmodelfix} variant and do not explore the low rank decomposition, learnable spectral filter and graph kernel/node embedding learning as we do.


\paragraph{Recurrent Neural Networks based Models}
The second class of models dates back to recursive neural networks~\cite{pollack1990recursive} which recurrently apply neural networks to trees following a particular order.
Graph neural networks (GNN)~\cite{scarselli2009graph} generalize recursive neural networks to arbitrary graphs and exploit the synchronous schedule to propagate information on graphs.
\cite{li2015gated} later proposes the gated graph neural networks (GGNN) which improves GNN by adding gated recurrent unit and training the network with back-propagation through time.
\cite{dai2016discriminative} learns graph embeddings via unrolling variational inference algorithms over a graph as a RNN.
\cite{hamilton2017inductive} introduces random subgraph sampling and explores different aggregation functions to scale GNN to large graphs.
\cite{liao2018graph} proposes asynchronous propagation schedules based on graph partitions to improve GNN.
Moreover, many applications have recently emerged for GNNs, including community detection~\cite{bruna2017community}, situation recognition~\cite{li2017situation}, RGBD semantic segmentation~\cite{qi20173d}, few-shot learning~\cite{garcia2017few}, probabilistic inference~\cite{yoon2018inference}, continuous control of reinforcement learning~\cite{wang2018nervenet,sanchez2018graph} and so on.

\paragraph{Graph based Manifold Learning}
The non-linear dimensionality reduction methods, such as locally linear embedding (LLE)~\cite{roweis2000nonlinear}, ISOMAP~\cite{tenenbaum2000global}, Hessian LLE~\cite{donoho2003hessian}, Laplacian eigenmaps~\cite{belkin2002laplacian}, and diffusion maps~\cite{coifman2006diffusion}, assume that the high-dimensional data lie on or close to a low dimensional manifold and use the local affinities in the weighted graph to learn the global features of the data. 
They are invaluable tools for embedding complex data in a low dimensional space and regressing functions over graphs. 
Spectral clustering~\cite{nadler2006diffusion,von2007tutorial}, semi-supervised learning~\cite{zhu2006semi}, and out-of-sample extension~\cite{belkin2006manifold} share the similar geometrical consideration of the associated graphs. 
Anisotropic graph kernels are useful in many applications. For example, \cite{zelnik2005self} improves the spectral clustering results with a self-tuning diffusion kernel that takes into account the local variance at each node in the Gaussian kernel function. 
Similarly, \cite{singer2008non} uses the anisotropic Gaussian kernel defined by the local Mahalanobis distances to extract independent components from nonlinear measurements of independent stochastic It{\^o} processes. 
Manifold learning with anisotropic kernel is also useful for data-driven dynamical system analysis, for example, detecting intrinsically slow variable for a stochastic dynamical system~\cite{singer2009detecting}, filtering dynamical processes~\cite{talmon2013empirical}, and long range climate forecasting~\cite{giannakis2015dynamics,zhao2016analog}. 
The anisotropic diffusion is able to use the local statistics of the measurements to convey the geometric information on the underlying factors rather than the specific realization or measurements at hand~\cite{lafferty2005diffusion,amari2007methods}. 


\section{Experiments}

In this section, we compare our two model variants against $9$ recent graph networks, including graph convolution networks for fingerprint (GCN-FP)~\cite{duvenaud2015convolutional}, gated graph neural networks (GGNN)~\cite{li2015gated}, diffusion convolutional neural networks (DCNN)~\cite{atwood2016diffusion}, Chebyshev networks (ChebyNet)~\cite{defferrard2016convolutional}, graph convolutional networks (GCN)~\cite{kipf2016semi}, message passing neural networks (MPNN)~\cite{Gilmer17}, graph sample and aggregate (GraphSAGE)~\cite{hamilton2017inductive}, graph partition neural networks (GPNN)~\cite{liao2018graph}, graph attention networks (GAT)~\cite{velickovic2017graph}. 
We test them on two sets of tasks: (1) semi-supervised document classification on $3$ citation networks~\cite{yang2016revisiting},
(2) supervised regression of molecule property on QM8 quantum chemistry dataset~\cite{ramakrishnan2015electronic}.
For fair comparison, we only tune model-related hyperparameters in all our experiments and share the others, e.g., using the same batch size.
We carefully tune hyperparameters based on cross-validation and report the best performance of each competitor.
Please refer to the appendix for more details on hyperparameters.
We implement all methods using PyTorch~\cite{paszke2017automatic} and release the code at \url{https://github.com/lrjconan/LanczosNetwork}. 


\subsection{Citation Networks}
Three citation networks used in this experiment are: Cora, Citeseer and Pubmed.
For each network, nodes are documents and connected based on their citation links.
Each node is associated with a bag-of-words feature vector.
We use the same pre-processing procedure and follow the transductive setting as in~\cite{yang2016revisiting}.
In particular, given a portion of nodes and their labeled content categories, e.g., history, science, the task is to predict the category for other unlabeled nodes within the same graph.
The statistics of these datasets are summarized in the appendix.
All experiments are repeated $10$ times with different random seeds. 
During each run, all methods share the same random seed.
We first experiment with the public data split and observe severe overfitting for almost all algorithms.
To mitigate overfitting and test the robustness of models, we then increase the difficulty of the task by reducing the portion of training examples to several levels and randomly split data.

Experimental results and exact portions of training examples are shown in Table.~\ref{table:citation_net}.
We use the reported best hyperparameters when available for public split and do cross-validation otherwise.
Hyperparameters are reported in the appendix.
From the table, we see that for random splits with different portion of training examples, since each run of experiment uses a separate random split, the overall variance is larger than its public counterpart. We see that GAT achieves the best performance on the public split but performs poorly on random splits with different portions of training examples.
This is partly due to the fact that GAT uses multiple dropout throughout the model which helps only if there is overfitting.
We can see that either {\ourmodelfix} or {\ourmodel} achieves state-of-the-art accuracy on random difficult splits and performs closely with respect to GAT on public splits.
This may be attributed to the fact that with fewer training examples, the model requires longer scale schemes to spread supervised information over the graph.
Our model provides an efficient way of leveraging such long scale information.

\begin{table}[t]
\centering
\resizebox{\linewidth}{!}{%
\setlength{\tabcolsep}{3pt}
\begin{tabular}{@{}c|cccccccccc@{}}
\hline
\toprule
Cora & GCN-FP & GGNN & DCNN & ChebyNet & GCN & MPNN & GraphSAGE & GAT & LNet & AdaLNet \\
\midrule
\midrule
Public & 74.6 $\pm$ 0.7 & 77.6 $\pm$ 1.7 & 79.7 $\pm$ 0.8 & 78.0 $\pm$ 1.2 & 80.5 $\pm$ 0.8 & 78.0 $\pm$ 1.1 & 74.5 $\pm$ 0.8 & \textbf{82.6 $\pm$ 0.7} & 79.5 $\pm$ 1.8 & 80.4 $\pm$ 1.1 \\
3\% & 71.7 $\pm$ 2.4 & 73.1 $\pm$ 2.3 & 76.7 $\pm$ 2.5 & 62.1 $\pm$ 6.7 & 74.0 $\pm$ 2.8 & 72.0 $\pm$ 4.6 & 64.2 $\pm$ 4.0 & 56.8 $\pm$ 7.9 & 76.3 $\pm$ 2.3 & \textbf{77.7 $\pm$ 2.4} \\
1\% & 59.6 $\pm$ 6.5 & 60.5 $\pm$ 7.1 & 66.4 $\pm$ 8.2 & 44.2 $\pm$ 5.6 & 61.0 $\pm$ 7.2 & 56.7 $\pm$ 5.9 & 49.0 $\pm$ 5.8 & 48.6 $\pm$ 8.0 & 66.1 $\pm$ 8.2 & \textbf{67.5 $\pm$ 8.7} \\
0.5\% & 50.5 $\pm$ 6.0 & 48.2 $\pm$ 5.7 & 59.0 $\pm$ 10.7 & 33.9 $\pm$ 5.0 & 52.9 $\pm$ 7.4 & 46.5 $\pm$ 7.5 & 37.5 $\pm$ 5.4 & 41.4 $\pm$ 6.9 & 58.1 $\pm$ 8.2 & \textbf{60.8 $\pm$ 9.0} \\
\midrule
Citeseer & GCN-FP & GGNN & DCNN & ChebyNet & GCN & MPNN & GraphSAGE & GAT & LNet & AdaLNet \\
\midrule
\midrule
Public & 61.5 $\pm$ 0.9 & 64.6 $\pm$ 1.3 & 69.4 $\pm$ 1.3 & 70.1 $\pm$ 0.8 & 68.1 $\pm$ 1.3 & 64.0 $\pm$ 1.9 & 67.2 $\pm$ 1.0 & \textbf{72.2 $\pm$ 0.9} & 66.2 $\pm$ 1.9 & 68.7 $\pm$ 1.0 \\
1\% & 54.3 $\pm$ 4.4 & 56.0 $\pm$ 3.4 & 62.2 $\pm$ 2.5 & 59.4 $\pm$ 5.4 & 58.3 $\pm$ 4.0 & 54.3 $\pm$ 3.5 & 51.0 $\pm$ 5.7 & 46.5 $\pm$ 9.3 & 61.3 $\pm$ 3.9 & \textbf{63.3 $\pm$ 1.8} \\
0.5\% & 43.9 $\pm$ 4.2 & 44.3 $\pm$ 3.8 & 53.1 $\pm$ 4.4 & 45.3 $\pm$ 6.6 & 47.7 $\pm$ 4.4 & 41.8 $\pm$ 5.0 & 33.8 $\pm$ 7.0 & 38.2 $\pm$ 7.1 & 53.2 $\pm$ 4.0 & \textbf{53.8 $\pm$ 4.7} \\
0.3\%& 38.4 $\pm$ 5.8 & 36.5 $\pm$ 5.1 & 44.3 $\pm$ 5.1 & 39.3 $\pm$ 4.9 & 39.2 $\pm$ 6.3 & 36.0 $\pm$ 6.1 & 25.7 $\pm$ 6.1 & 30.9 $\pm$ 6.9 & 44.4 $\pm$ 4.5 & \textbf{46.7 $\pm$ 5.6} \\
\midrule
Pubmed & GCN-FP & GGNN & DCNN & ChebyNet & GCN & MPNN & GraphSAGE & GAT & LNet & AdaLNet \\
\midrule
\midrule
Public & 76.0 $\pm$ 0.7 & 75.8 $\pm$ 0.9 & 76.8 $\pm$ 0.8 & 69.8 $\pm$ 1.1 & 77.8 $\pm$ 0.7 & 75.6 $\pm$ 1.0 & 76.8 $\pm$ 0.6 & 76.7 +- 0.5 & \textbf{78.3 $\pm$ 0.3} & 78.1 $\pm$ 0.4 \\
0.1\% & 70.3 $\pm$ 4.7 & 70.4 $\pm$ 4.5 & 73.1 $\pm$ 4.7 & 55.2 $\pm$ 6.8 & 73.0 $\pm$ 5.5 & 67.3 $\pm$ 4.7 & 65.4 $\pm$ 6.2 & 59.6 +- 9.5 & \textbf{73.4 $\pm$ 5.1} & 72.8 $\pm$ 4.6 \\
0.05\% & 63.2 $\pm$ 4.7 & 63.3 $\pm$ 4.0 & 66.7 $\pm$ 5.3 & 48.2 $\pm$ 7.4 & 64.6 $\pm$ 7.5 & 59.6 $\pm$ 4.0 & 53.0 $\pm$ 8.0 & 50.4 +- 9.7 & \textbf{68.8 $\pm$ 5.6} & 66.0 $\pm$ 4.5 \\
0.03\% & 56.2 $\pm$ 7.7 & 55.8 $\pm$ 7.7 & 60.9 $\pm$ 8.2 & 45.3 $\pm$ 4.5 & 57.9 $\pm$ 8.1 & 53.9 $\pm$ 6.9 & 45.4 $\pm$ 5.5 & 50.9 +- 8.8 & 60.4 $\pm$ 8.6 & \textbf{61.0 $\pm$ 8.7} \\
\bottomrule
\end{tabular}
}
\caption{Test accuracy with $10$ runs on citation networks. The public splits in Cora, Citeseer and Pubmed contain $5.2\%$, $3.6\%$ and $0.3\%$ training examples respectively.}
\label{table:citation_net}
\end{table}

\subsection{Quantum Chemistry}

We then benchmark all algorithms on the QM8 quantum chemistry dataset which comes from a recent study on modeling quantum mechanical calculations of electronic spectra and excited state energy of small molecules~\cite{ramakrishnan2015electronic}.
The setup of QM8 is as follows.
Atoms are treated as nodes and they are connected to each other following the structure of the corresponding molecule.
Each edge is labeled with a chemical bond.
Note that two atoms in one molecule can have multiple edges belong to different chemical bonds.
Therefore a molecule is actually modeled as a multigraph.
We also use explicit hydrogen in molecule graphs as suggested in~\cite{Gilmer17}.
Since some models cannot leverage feature on edges easily, we use the molecule graph itself as the only input information for all models so that it is a fair comparison.
As demonstrated in our ablation studies, learning node embeddings for atoms is very helpful.
Therefore, we augment all competitors and our models with this component.
The task is to predict $16$ different quantities of electronic spectra and energy per molecule graph which boils down to a regression problem.
There are $21786$ molecule graphs in total of which the average numbers of nodes and edges are around $16$ and $21$.
There are $6$ different chemical bonds and $70$ different atoms throughout the dataset.
We use the split provided by DeepChem~\footnote{https://deepchem.io/} which have $17428$, $2179$ and $2179$ graphs for training, validation and testing respectively.
Following~\cite{Gilmer17,wu2018moleculenet}, we use mean squared error (MSE) as the loss for training and weighted mean absolute error (MAE) as the evaluation metric.
We repeat all experiments $3$ times with different random seeds and report the average performance and standard deviation.
The same random seed is shared for all methods per run.
Hyperparameters are reported in the appendix.
The validation and test MAEs are shown in Table~\ref{table:qm8}.
As you can see, {\ourmodelfix} and {\ourmodel} achieve better performances than all other competitors.
Note that DCNN also achieves good performance with the carefully chosen scale parameters since it is somewhat similar to our model in terms of leveraging multi-scale information.

\begin{table}[t]
\centering
{%
\begin{tabular}{@{}c|cc@{}}
\hline
\toprule
Methods & Validation MAE ($\times 1.0e^{-3}$) & Test MAE ($\times 1.0e^{-3}$) \\
\midrule
\midrule
GCN-FP~\cite{duvenaud2015convolutional} & 15.06 $\pm$ 0.04 & 14.80 $\pm$ 0.09 \\
GGNN~\cite{li2015gated} & 12.94 $\pm$ 0.05 & 12.67 $\pm$ 0.22 \\
DCNN~\cite{atwood2016diffusion} & 10.14 $\pm$ 0.05 & 9.97 $\pm$ 0.09 \\
ChebyNet~\cite{defferrard2016convolutional} & 10.24 $\pm$ 0.06 & 10.07 $\pm$ 0.09 \\
GCN~\cite{kipf2016semi} & 11.68 $\pm$ 0.09 & 11.41 $\pm$ 0.10 \\
MPNN~\cite{Gilmer17} & 11.16 $\pm$ 0.13 & 11.08 $\pm$ 0.11 \\
GraphSAGE~\cite{hamilton2017inductive} & 13.19 $\pm$ 0.04 & 12.95 $\pm$ 0.11 \\
GPNN~\cite{liao2018graph} & 12.81 $\pm$ 0.80 & 12.39 $\pm$ 0.77 \\
GAT~\cite{velickovic2017graph} & 11.39 $\pm$ 0.09 & 11.02 $\pm$ 0.06 \\
\ourmodelfix & \textbf{9.65 $\pm$ 0.19} & \textbf{9.58 $\pm$ 0.14} \\
\ourmodel & 10.10 $\pm$ 0.22 & 9.97 $\pm$ 0.20 \\
\bottomrule
\end{tabular}
}
\caption{Mean absolute error on QM8 dataset.}
\vspace{-0.5cm}
\label{table:qm8}
\end{table}

\subsection{Ablation Study}

We also did a thorough ablation study of our modeling components on the validation set of QM8.

\begin{table}
\centering
\resizebox{\linewidth}{!}
{%
\setlength{\tabcolsep}{3pt}
\begin{tabular}{@{}ccccccc|c@{}}
\hline
\toprule
Model & \begin{tabular}{@{}c@{}}Graph \\ Kernel\end{tabular} & \begin{tabular}{@{}c@{}}Node \\ Embedding\end{tabular} & \begin{tabular}{@{}c@{}}Spectral \\ Filter\end{tabular} & Short Scales & Long Scales & \begin{tabular}{@{}c@{}}Lanczos \\ Step\end{tabular} & \begin{tabular}{@{}c@{}}Validation \\ MAE ($\times 1.0e^{-3}$)\end{tabular} \\
\midrule
\midrule
\ourmodelfix &  & one-hot &  & \{1, 2, 3\} &  &  & 10.71 \\
\ourmodelfix &  & one-hot &  & \{3, 5, 7\} &  &  & 10.60 \\
\ourmodelfix &  & one-hot &  &  & \{10, 20, 30\} & 20 & 10.54 \\
\ourmodelfix &  & one-hot &  & \{3, 5 ,7\} & \{10, 20, 30\} & 20 & \textbf{10.41} \\
\midrule
\midrule
\ourmodelfix &  & one-hot &  &  & \{10, 20, 30\} & 5 & 10.49 \\
\ourmodelfix &  & one-hot &  &  & \{10, 20, 30\} & 10 & \textbf{10.44} \\
\ourmodelfix &  & one-hot &  &  & \{10, 20, 30\} & 20 & 10.54 \\
\ourmodelfix &  & one-hot &  &  & \{10, 20, 30\} & 40 & 10.49 \\
\midrule
\midrule
\ourmodelfix &  & one-hot & 3-MLP & \{3, 5 ,7\} & \{10, 20, 30\} & 20 & 10.44 \\
\ourmodelfix &  & one-hot & 5-MLP & \{3, 5 ,7\} & \{10, 20, 30\} & 20 & 10.54 \\
\ourmodelfix &  & \checkmark & 3-MLP & \{3, 5 ,7\} & \{10, 20, 30\} & 20 & 10.26 \\
\ourmodelfix &  & \checkmark & 3-MLP &  & \{1, 2, 3, 5, 7, 10, 20, 30\} & 20 & \textbf{9.56} \\
\midrule
\midrule
\ourmodel & \checkmark & one-hot & 3-MLP & \{3, 5, 7\} & \{10, 20, 30\} & 20 & 10.99 \\
\ourmodel &  & \checkmark & 3-MLP & \{3, 5, 7\} & \{10, 20, 30\} & 20 & 10.20 \\
\ourmodel &  & \checkmark & 3-MLP & \{1, 2, 3\} & \{5, 7, 10, 20, 30\} & 20 & \textbf{9.96} \\
\bottomrule
\end{tabular}
}
\caption{Ablation study on QM8 dataset. Empty cell means that the component is neither used nor applicable. $X$-MLP means a MLP with $X$ hidden layers. `one-hot' means the node embedding is fixed as the one-hot encoding throughout learning and inference.}
\label{table:qm8_ablation}
\end{table}

\textbf{Multi-Scale Graph Convolution:} 
We first study the effect of multi-scale graph convolution. 
In order to rule out the impact of other factors, we use {\ourmodelfix}, do not employ the learnable spectral filter and use the one-hot encoding as the node embedding.
The results are shown in the first row of Table~\ref{table:qm8_ablation}.
Using long scales for graph convolution clearly helps on this task.
Combining both short and long scales further improves results.

\textbf{Lanczos Step:}
We then investigate the Lanczos step since it will have an impact on the accuracy of the low rank approximation induced by the Lanczos algorithm.
The results are shown in the second row of Table~\ref{table:qm8_ablation}.
We can see that the performance is better with a relatively small Lanczos step like $10$ and $20$ which makes sense since the average number of nodes in this dataset is around $16$.

\textbf{Learning Spectral Filter:}
We then study whether learning spectral filter will help improve performance.
The results are shown in the third row of Table~\ref{table:qm8_ablation}.
Adding a $3$-layer MLP does help reduce the error compared to not using any learnable spectral filter.
Note that the MLP consists of $128$ hidden units per layer and uses ReLU as the nonlinearity.
However, using a deeper MLP does not seem to be helpful which might be caused by the challenges in optimization.

\textbf{Graph Kernel/Node Embedding:}
At last, we study the usefulness of adding graph kernel and node embeddings.
We first fix the node embedding as one-hot encoding and learn a $3$ layer MLP which is the function $f_{\theta}$ in Eq.~(\ref{eq:graph_kernel}).
Next, we learn the node embeddings directly.
Intuitively, learning embeddings amounts to learn a separate function $f$ per node whereas our graph kernel learning enforces that $f$ is shared for all nodes, thus being more restrictive.
As shown in the $3$-rd and $4$-th rows of the table, learning node embeddings significantly improves the performance for both {\ourmodelfix} and {\ourmodel} and is more effective than learning graph kernels.
Also, tuning the scale parameters further boosts the performance.
\section{Conclusion}
In this paper, we propose {\ourmodelfix} which leverages the Lanczos algorithm to construct a low rank approximation of the graph Laplacian.
It not only provides an efficient way to gather multi-scale information for graph convolution but also enables learning spectral filters.
Additionally, we propose a model variant {\ourmodel} which facilitates graph kernel and node embedding learning.
We show that our model has a close relationship with graph based manifold learning, especially diffusion map.
Experimental results demonstrate that our model outperforms a range of other graph networks, on challenging graph problems.
We are currently exploring customized eigen-decomposition methods for tridiagonal matrices, which will potentially further improve our {\ourmodel}. 
Overall, work in this direction holds promise for allowing deep learning to scale up to very large graph problems.

\section*{Acknowledgments}
RL thanks Roger Grosse for introducing the Lanczos algorithm to him. RL was supported by Connaught International Scholarships. RL, RU and RZ were supported in part by the Intelligence Advanced Research Projects Activity (IARPA) via Department of Interior/Interior Business Center (DoI/IBC) contract number D16PC00003. The U.S. Government is authorized to reproduce and distribute reprints for Governmental purposes notwithstanding any copyright annotation thereon. Disclaimer: the views and conclusions contained herein are those of the authors and should not be interpreted as necessarily representing the official policies or endorsements, either expressed or implied, of IARPA, DoI/IBC, or the U.S. Government.

\clearpage

\bibliographystyle{unsrt}
\bibliography{ref}

\clearpage
\section{Appendix}

\subsection{Low Rank Approximation}

We first state the following Lemma from~\cite{parlett1980symmetric} without proof and then prove our Theorem \ref{the:error} following~\cite{simon2000low}.

\begin{lemma}
Let $A \in \mathbb{R}^{N \times N}$ be symmetric and $v$ an arbitrary vector. 
Define Krylov subspace $\mathcal{K}_{m} \equiv \text{span} \{v, Av, \dots, A^{m-1}v\}$.
Let $A = U \Lambda U^{\top}$ be the eigendecomposition of $A$ with $\Lambda_{i,i} = \lambda_i$ and $\lambda_1 \geq \dots \geq \lambda_n$.
Denoting $U = [u_1, \dots, u_N]$ and $\mathcal{U}_j = \mathrm{span}\{u_1, \dots, u_j\}$, then
\begin{align}
\tan{\left(u_j, \mathcal{K}_{m}\right)} \leq \frac{\sin{\left(v, \mathcal{U}_j\right)} \prod_{k=1}^{j-1} (\lambda_k - \lambda_n) / (\lambda_k - \lambda_j) }{\cos{\left(v, u_j\right)} T_{m-j}(1+2\gamma) }, \nonumber
\end{align}
where $T_{m-j}(x)$ is the Chebyshev Polynomial of degree $m-j$ and $\gamma = (\lambda_{j} - \lambda_{j+1}) / (\lambda_{j+1} - \lambda_{N})$.
\end{lemma}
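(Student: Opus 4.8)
The plan is to follow the classical Kaniel--Paige--Saad convergence analysis of Krylov subspaces (as developed in Parlett's monograph). The key reduction is that every vector of $\mathcal{K}_m$ has the form $p(A)v$ for a polynomial $p$ of degree at most $m-1$, so approximating the eigenvector $u_j$ from $\mathcal{K}_m$ becomes a polynomial minimax problem. First I would record the variational characterization of the angle between the unit vector $u_j$ and the subspace,
\begin{align}
\tan\left(u_j, \mathcal{K}_m\right) = \min_{w \in \mathcal{K}_m,\, u_j^\top w \ne 0} \frac{\| w - (u_j^\top w) u_j \|}{|u_j^\top w|}. \nonumber
\end{align}
Expanding the starting vector in the eigenbasis as $v = \sum_{i=1}^N \gamma_i u_i$ (WLOG $\|v\|=1$, so $\gamma_i = u_i^\top v$) and taking the trial vector $w = p(A)v = \sum_i \gamma_i p(\lambda_i) u_i$, the numerator and denominator decouple across eigendirections, giving for every admissible $p$ (degree $\le m-1$, $p(\lambda_j)\ne 0$),
\begin{align}
\tan^2\left(u_j, \mathcal{K}_m\right) \le \frac{\sum_{i \ne j} \gamma_i^2\, p(\lambda_i)^2}{\gamma_j^2\, p(\lambda_j)^2}. \nonumber
\end{align}
It then remains to exhibit a single good polynomial $p$.

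The second step is the construction of $p$. I would take
\begin{align}
p(\lambda) = \left[ \prod_{k=1}^{j-1} (\lambda - \lambda_k) \right] T_{m-j}\left(\ell(\lambda)\right), \qquad \ell(\lambda) = \frac{2\lambda - \lambda_{j+1} - \lambda_N}{\lambda_{j+1} - \lambda_N}, \nonumber
\end{align}
which has degree $(j-1)+(m-j)=m-1$ as required. The product factor annihilates the contributions of $u_1,\dots,u_{j-1}$, so the numerator sum collapses to $\sum_{i=j+1}^N$. The affine map $\ell$ sends the interval $[\lambda_N,\lambda_{j+1}]$ containing the remaining eigenvalues onto $[-1,1]$ and sends $\lambda_j$ to $\ell(\lambda_j)=1+2\gamma$ with $\gamma=(\lambda_j-\lambda_{j+1})/(\lambda_{j+1}-\lambda_N)$ --- a short computation I would verify explicitly. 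For each $i>j$ the factor $\prod_{k<j}(\lambda_i-\lambda_k)^2/(\lambda_j-\lambda_k)^2$ is maximized by pushing $\lambda_i$ down to $\lambda_N$, which yields the bound $\prod_{k=1}^{j-1}(\lambda_k-\lambda_N)^2/(\lambda_k-\lambda_j)^2$, while $|T_{m-j}(\ell(\lambda_i))|\le 1$ on $[-1,1]$ and $T_{m-j}(\ell(\lambda_j))=T_{m-j}(1+2\gamma)$ supplies the large denominator.

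Finally I would assemble the pieces and translate back to angle notation. Using $\sum_{i=j+1}^N \gamma_i^2 = 1 - \sum_{i=1}^{j}\gamma_i^2 = \sin^2(v,\mathcal{U}_j)$ together with $\gamma_j^2 = \cos^2(v,u_j)$, substituting the three bounds of the previous paragraph and taking square roots reproduces exactly the claimed inequality. The main obstacle is the polynomial-choice step: one must justify that the shifted Chebyshev polynomial is the correct minimax object (it grows fastest outside $[-1,1]$ among polynomials bounded by $1$ on that interval), and handle the edge cases where some $\gamma_i$ vanish or where eigenvalues cluster, since then the factors $(\lambda_k-\lambda_j)$ in the denominator and the separation $\gamma$ become delicate; once the polynomial is fixed, the remaining manipulation across the eigenbasis is routine bookkeeping.
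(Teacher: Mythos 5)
The paper never proves this lemma: it is stated in the appendix explicitly ``without proof,'' imported from Parlett's monograph~\cite{parlett1980symmetric} and used as a black box inside the proof of Theorem~\ref{the:error}. So there is no in-paper argument to compare against; what you have written is the classical Kaniel--Paige--Saad convergence analysis, i.e., essentially the proof in the cited reference, and it is correct. All the key steps check out: the variational identity $\tan(u_j,\mathcal{K}_m)=\min_{w\in\mathcal{K}_m,\,u_j^\top w\neq 0}\Vert w-(u_j^\top w)u_j\Vert/\vert u_j^\top w\vert$ is scale-invariant and valid; writing $w=p(A)v$ with $\deg p\le m-1$ decouples the bound into $\sum_{i\neq j}\gamma_i^2 p(\lambda_i)^2/\bigl(\gamma_j^2 p(\lambda_j)^2\bigr)$; your polynomial $p(\lambda)=\prod_{k<j}(\lambda-\lambda_k)\,T_{m-j}(\ell(\lambda))$ has the right degree, annihilates the directions $u_1,\dots,u_{j-1}$, and the affine map satisfies $\ell(\lambda_N)=-1$, $\ell(\lambda_{j+1})=1$, $\ell(\lambda_j)=1+2\gamma$ as you claim; and the final translation $\sum_{i>j}\gamma_i^2=\sin^2(v,\mathcal{U}_j)$, $\gamma_j^2=\cos^2(v,u_j)$ reproduces the stated inequality exactly (note the paper's $\lambda_n$ in the product is a typo for $\lambda_N$). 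One correction to your own assessment: the ``main obstacle'' you flag is not one. You never need the minimax \emph{optimality} of the shifted Chebyshev polynomial; the inequality only uses $\vert T_{m-j}\vert\le 1$ on $[-1,1]$ and $T_{m-j}(1+2\gamma)\ge 1$ at the exterior point, both elementary --- optimality only explains why this choice gives a sharp rate. Likewise the edge cases resolve themselves: if $\cos(v,u_j)=0$ or eigenvalues coincide ($\lambda_k=\lambda_j$ for $k<j$, or $\lambda_{j+1}=\lambda_N$), the right-hand side is infinite or undefined and the bound is vacuous rather than false, which is the implicit hypothesis of the lemma; and vanishing coefficients $\gamma_i$ for $i\neq j$ only help the bound. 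A last cosmetic point: you reuse $\gamma_i$ for the expansion coefficients $u_i^\top v$ while $\gamma$ denotes the spectral-gap ratio in the statement, which is worth renaming in a final write-up.
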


\errortheorem*
\begin{proof}
From Lanczos algorithm, we have $SQ = QT$.
Therefore, 
\begin{align}
\Vert S - Q T Q^{\top} \Vert_{F}^{2} & = \Vert S - S Q Q^{\top} \Vert_{F}^{2} \nonumber \\
& = \Vert S (I - Q Q^{\top}) \Vert_{F}^{2}
\end{align}
Let $P_{Q}^{\perp} \equiv I - QQ^{\top}$, the orthogonal projection onto the 
orthogonal complement of subspace $\text{span}\{Q\}$.
Relying on the eigendecomposition, we have,
\begin{align}
\Vert S - Q T Q^{\top} \Vert_{F}^{2} & = \Vert U \Lambda U^{\top} (I - Q Q^{\top}) \Vert_{F}^{2} \nonumber \\
& = \Vert \Lambda U^{\top} (I - Q Q^{\top}) \Vert_{F}^{2} \nonumber \\
& = \Vert (I - Q Q^{\top}) U \Lambda \Vert_{F}^{2} \nonumber \\
& = \Vert \left[ \lambda_1 P_{Q}^{\perp} u_1, \dots, \lambda_N P_{Q}^{\perp} u_N \right] \Vert_{F}^{2},
\end{align}
where we use the fact that $\Vert RA \Vert_F^2 = \Vert A \Vert_F^2$ for any orthogonal matrix $R$ and $\Vert A^{\top} \Vert_F^2 = \Vert A \Vert_F^2$.

Note that for any $j$ we have,
\begin{align}
\left \| \left[ \lambda_1 P_{Q}^{\perp} u_1, \dots, \lambda_N P_{Q}^{\perp} u_N \right] \right \|_{F}^{2} & = \sum_{i=1}^{N}  \lambda_i^2 \Vert P_{Q}^{\perp} u_i \Vert^2 \nonumber \\
& \leq \sum_{i=1}^{j} \lambda_i^2 \Vert P_{Q}^{\perp} u_i \Vert^2 + \sum_{i=j+1}^{N} \lambda_i^2,
\end{align}
where we use the fact that for any $i$, $\Vert P_{Q}^{\perp} u_i \Vert^2 = \Vert u_i \Vert^2 - \Vert u_i - P_{Q}^{\perp} u_i \Vert^2 \leq \Vert u_i \Vert^2 = 1$.

Note that we have $\text{span}\{Q\} = \text{span}\{v, Sv, \dots, S^{K-1}v\} \equiv \mathcal{K}_{K}$ from the Lanczos algorithm.
Therefore, we have,
\begin{align}
    \Vert P_{Q}^{\perp} u_i \Vert = \vert \sin{\left(u_i, \mathcal{K}_{K} \right)} \vert \leq \vert \tan{\left(u_i, \mathcal{K}_{K} \right)} \vert.
\end{align}
Applying the above lemma with $A = S$, we finish the proof.

\end{proof}


\subsection{Lanczos Algorithm}

Utilizing exact arithmetic, Lanczos vectors are orthogonal to each other. 
However, in floating point arithmetic, it is well known that the round-off error will make the Lanczos vectors lose orthogonality as the iteration proceeds.
One could apply a full Gram-Schmidt (GS) process $z = z - \sum_{i=1}^{j-1} z^{\top}q_{i}q_{i}$ after line $6$ of Alg.~\ref{alg:lanczos} to ensure orthogonality.
Other partial or selective re-orthogonalization could also be explored. 
However, since we found the orthgonality issue does not hurt overall performance with a small iteration number, e.g., $K=20$, and the full GS process is computationally expensive, we do not add such a step.
Although some customized eigendecomposition methods, e.g., implicit QL~\cite{press2007numerical}, exist for tridiagonal matrix, we leave it for future exploration due to its complicated implementation.

\subsection{Experiments}

For ChebyNet, we do not use graph coarsening in all experiments due to its demanding computation cost for large graphs.
Also, for small molecule graphs, coarsening generally does not help since it loses information compared to directly stacking another layer of original graph.

\paragraph{Citation Networks}

\begin{table}
\centering
{%
\begin{tabular}{@{}l|rrrrrrrrr@{}}
\hline
\toprule
Dataset  & \#Nodes & \#Edges & \#Classes & \#Features & $S_0$ & $S_1$ & $S_2$ & $S_3$  \\
\midrule
Citeseer & 3,327 & 4,732 & 6 & 3,703 & 3.6\%& 3\%& 1\%& 0.5\% \\
Cora  & 2,708 & 5,429 & 7 & 1,433 & 5.2\% & 1\%& 0.5\%& 0.3\%\\
Pubmed  & 19,717 & 44,338 & 3 & 500 & 0.3\% & 0.1\%& 0.05\%& 0.03\%\\
\bottomrule
\end{tabular}
}
\caption{Dataset statistics. $S_0$ is portion of training examples in the public split. $S_1$ to $S_3$ are the ones of $3$ random splits generated by us.}
\label{tbl:citation_stats}
\end{table}

The statistics of three citation networks are summarized in Table~\ref{tbl:citation_stats}.
We now report the important hyperparameters chosen via cross-validation for each method.
All methods are trained with Adam with learning rate $1.0e^{-2}$ and weight decay $5.0e^{-4}$.
The maximum number of epoch is set to $200$.
Early stop with window size $10$ is also adopted.
We tune hyperparameters using Cora alone and fix them for citeseer and pubmed.
For convolution based methods, we found $2$ layers work the best.
In GCN-FP, we set the hidden dimension to $64$ and dropout to $0.5$.
In GGNN, we set the hidden dimension to $64$, the propagate step to $2$ and aggregation function to summation.
In DCNN, we set the hidden dimension to $64$, dropout to $0.5$ and use diffusion scales $\{1, 2, 5\}$.
In ChebyNet, we set the polynomial order to $5$, the hidden dimension to $64$ and dropout to $0.5$.
In GCN, we set the hidden dimension to $64$ and dropout to $0.5$.
In MPNN, we use GRU as update function and set the hidden dimension to $64$ and dropout to $0.5$. 
No edge embedding is used as there is just one edge type.
In GraphSAGE, we set the number of sampled neighbors to $500$, the hidden dimension to $64$, dropout to $0.5$ and the aggregation function to average.
In GAT, we set the number of heads per layer to $8, 1$, hidden dimension per head to $8$ and dropout to $0.6$.
In {\ourmodelfix}, we set the short and long diffusion scales to $\{1, 2 ,5, 7\}$ and $\{10, 20, 30\}$ respectively. 
The hidden dimension is $64$ and dropout is $0.5$.
Lanczos step is $20$.
1-layer MLP with $128$ hidden units and ReLU nonlinearity is used as the spectral filter.
In {\ourmodel}, we set the short and long diffusion scales to $\{1, 2 ,5\}$ and $\{10, 20\}$ respectively. 
The hidden dimension is $64$ and dropout is $0.5$.
Lanczos step is $20$.
1-layer MLP with $128$ hidden units and ReLU nonlinearity is used as the spectral filter.

\paragraph{Quantum Chemistry}

We now report the important hyperparameters chosen via cross-validation for each method.
All methods are trained with Adam with learning rate $1.0e^{-4}$ and no weight decay.
The maximum number of epoch is set to $200$.
Early stop with window size $10$ is also adopted.
For convolution based methods, we found $7$ layers work the best.
We augment all methods with $64$-dimension node embedding and add edge types by either feeding a multiple-channel graph Laplacian matrix or directly adding a separate message function per edge type.
For all methods, no dropout is used since it slightly hurts the performance.
In GCN-FP, we set the hidden dimension to $128$.
In GGNN, we set the hidden dimension to $128$, the propagate step to $15$ and aggregation function to average.
In DCNN, we set the hidden dimension to $128$ and use diffusion scales $\{3, 5, 7, 10, 20, 30\}$.
In ChebyNet, we set the polynomial order to $5$, the hidden dimension to $128$.
In GCN, we set the hidden dimension to $128$.
In MPNN, we use GRU as update function, set the number of propagation to $7$, set the hidden dimension to $128$, use a 1-layer MLP with $1024$ hidden units and ReLU nonlinearity as the message function and set the number of unroll step of \textit{Set2Vec} to $10$.
In GraphSAGE, we set the number of sampled neighbors to $40$, the hidden dimension to $128$ and the aggregation function to average.
In GAT, we set the number of heads of all $7$ layers to $8$ and hidden dimension per head to $16$.
In {\ourmodelfix}, we do not use short diffusion scales and set long ones to $\{1, 2, 3, 5, 7, 10, 20, 30\}$. 
The hidden dimension is $128$.
Lanczos step is $20$.
1-layer MLP with $128$ hidden units and ReLU nonlinearity is used as the spectral filter.
In {\ourmodel}, we set the short and long diffusion scales to $\{1, 2, 3\}$ and $\{5, 7, 10, 20, 30\}$ respectively. 
The hidden dimension is $128$.
Lanczos step is $20$.
3-layer MLP with $4096$ hidden units and ReLU nonlinearity is used as the spectral filter.

\end{document}